\newtheorem{theorem}{Theorem}
\newtheorem{proposition}{Proposition}
\DeclareMathOperator{\co}{co}
\newcommand{\R}{\mathbb{R}}
\newcommand{\tr}{^T}
\newcommand{\mc}{\mathcal}
\definecolor{blue}{rgb}{0.0, 0.0, 1.0}
\DeclareMathOperator*{\minimize}{minimize}
\DeclareMathOperator*{\subjto}{subject\,to}
\title{\LARGE \bf
Data-Driven Adaptive Task Allocation for Heterogeneous Multi-Robot Teams Using Robust Control Barrier Functions}
\author{Yousef Emam$^{1}$, Gennaro Notomista, Paul Glotfelter$^{2}$,  Magnus Egerstedt$^{1}$
	\thanks{*This work was supported by the Army Research Lab through ARL DCIST CRA W911NF-17-2-0181.}
	\thanks{$^{1}$Y. Emam, M. Egerstedt are with the Institute for Robotics and Intelligent Machines, Georgia Institute of Technology, Atlanta, GA 30332, USA {\tt\small\{emamy, magnus\}@gatech.edu}}
	\thanks{$^{2}$P. Glotfelter is with Optimus Ride, MA 0$7$10, Massachusetts, USA {\tt\small pglotfel@gmail.com}}
}
\date{\today}
\begin{document}

\maketitle

\begin{abstract}
Multi-robot task allocation is a ubiquitous problem in robotics due to its applicability in a variety of scenarios. Adaptive task-allocation algorithms account for unknown disturbances and unpredicted phenomena in the environment where robots are deployed to execute tasks. However, this adaptivity typically comes at the cost of requiring precise knowledge of robot models in order to evaluate the allocation effectiveness and to adjust the task assignment online. As such, environmental disturbances can significantly degrade the accuracy of the models which in turn negatively affects the quality of the task allocation. In this paper, we leverage Gaussian processes, differential inclusions, and robust control barrier functions to learn environmental disturbances in order to guarantee robust task execution.
We show the implementation and the effectiveness of the proposed framework on a real multi-robot system.
\end{abstract}

\section{Introduction}
\label{sec:intro}

Multi-Robot Task Allocation (MRTA) is typically required in a number of applications such as search and rescue and precision agriculture \cite{taxonomy}. Many of those applications involve a variety of concurrent tasks, which in turn requires the robots to have varying capabilities. One such example is disaster relief, where the team of robots must both seek and rescue the victims \cite{hussein2014multi}. In such scenarios, where the team is heterogeneous, it is crucial to consider the specialization of each robot with respect to the different available tasks \cite{parker1994heterogeneous} to perform an effective task allocation.

There exists a large body of work on heterogeneous MRTA such as \cite{parker1994heterogeneous, palmieri2018self, 4803959}. In a broad sense, heterogeneous MRTA can be seen as an extension of MRTA with the additional complexity that assigning the same task to different robots may result in different costs \cite{taxonomy}. Therefore, the underlying approaches are similar to homogeneous MRTA and can be classified as market-based, centralized, or decentralized approaches as highlighted in \cite{khamis2015multi}. Moreover, to differentiate between the various capabilities of the different robots, a common trend in existing frameworks is to assign each agent a specialization towards each task (e.g., \cite{higuera2013fair, zlot2006market, iocchi2003distributed, iijima2017adaptive}). That is, the discrepancy in the utility gained from assigning a task to one robot versus another is encoded via these specializations. Consequently, using inaccurate specialization values when planning can degrade the quality of allocations, and in the worst case, can assign robots to tasks which they are not capable of accomplishing. This possibility motivates the need for a framework that learns these specializations on-the-fly in the case they are unknown a priori.


Toward this goal, in \cite{emam2020adaptive}, the authors propose an adaptive task allocation and execution scheme based on \cite{notomista2019optimal} that dynamically updates these specializations. As opposed to the conventional approach of sequentially allocating tasks then executing them, the framework in \cite{notomista2019optimal} simultaneously allocates the tasks and computes the control input for the robots to execute them. Consequently, this approach has the benefit of explicitly accounting for the control input in the cost function when allocating tasks. Moreover, as presented and discussed in \cite{emam2020adaptive}, the task execution is performed as the minimization of energy-like functions, which by nature encode a measure of progress and, as such, are readily used in specialization adaptation. The specialization is learned by comparing the expected versus actual progress made by the robots towards minimizing the energy functions. For example, if the robot performed worse than expected, then its specialization toward that task is decreased. In \cite{emam2020adaptive}, the authors consider tasks that can be encoded as positive definite functions of the state. We note that although not all tasks can be encoded as energy functions, this formulation is applicable to a wide variety of coordinated control tasks such as formation control and coverage \cite{cortes2017coordinated}. 

The approach in \cite{emam2020adaptive}, however, requires the knowledge of a predefined, precise dynamical model of the robots to evaluate the expected task progresses. These models do not account for environmental disturbances or unknown phenomena; therefore, the quality of both the allocation and the execution of tasks by the robots may be affected by these disturbances. More importantly, even small environmental disturbances may result in the deterioration of the estimated specialization of the robots. In other words, the framework cannot distinguish between disturbances that the robots can and cannot overcome. This negatively affects the ability to allocate tasks based on specialization in an effective fashion.

Motivated by these limitations, in this paper, we propose a novel framework that leverages Gaussian Processes (GPs) and Robust Control Barrier Functions (RCBFs) as introduced in \cite{emam2019robust} to learn and model the disturbed system as well as to ensure the execution of tasks under these disturbances. We also introduce a new update law for the specializations based on the learned disturbed dynamic models. This allows the framework to distinguish between disturbances that the robots can and cannot overcome---the former being due to model errors, while the latter is caused by the actual incapability of the robots at performing the task.


\section{Background Material} 
\label{sec:background}

In this section, we introduce CBFs and RCBFs for disturbed dynamical systems, which are leveraged in the task allocation component of the proposed MRTA framework.

\subsection{Control Barrier Functions} 
\label{subsec:control-barrier-functions}
Control Barrier Functions (CBFs) are formulated with respect to control systems \cite{ames2014,xu2015,AmesBarriers,ogren2006autonomous}, and this work considers control-affine systems
\begin{equation}
    \label{eq:control-affine}
    \dot{x}(t) = f(x(t)) + g(x(t))u(x(t)) , x(0) = x_{0} ,
\end{equation}
where $f : \mathbb{R}^{n} \to \mathbb{R}^{n}$, $g : \mathbb{R}^{n} \to \mathbb{R}^{n \times m}$, and $u : \mathbb{R}^{n} \to \mathbb{R}^{m}$ are continuous. A set $\mathcal{C}$ is called forward invariant with respect to \eqref{eq:control-affine} if given a solution (potentially nonunique) to \eqref{eq:control-affine} $x : [0, t_{1}] \to \mathbb{R}^{n}$,
$x_{0} \in \mathcal{C} \implies x(t) \in \mathcal{C}, \forall t \in [0, t_{1}]$.

Barrier functions guarantee forward invariance of a particular set that typically represents a constraint in a robotic system, such as collision avoidance or connectivity maintenance.   Specifically, a barrier function is a continuously differentiable function $h : \mathbb{R}^{n} \to \mathbb{R}$ (sometimes referred to as a candidate barrier function), and the so-called safe set $\mathcal{C} \subset \mathbb{R}^{n}$ is defined as the super-zero level set of $h$
$\mathcal{C} = \{x \in \mathbb{R} : h(x) \geq 0\}$.
Now, the goal becomes to ensure the forward set invariance of $\mathcal{C}$, which can be done equivalently by guaranteeing positivity of $h$ along trajectories to \eqref{eq:control-affine}.

Positivity can be shown if there exists a locally Lipschitz extended class-$\mathcal{K}$ function $\gamma : \mathbb{R} \to \mathbb{R}$ and a continuous function $u : \mathbb{R}^{n} \to \mathbb{R}^{m}$ such that
\begin{equation}
    \label{eq:barrier-certificate-reg}
     L_{f}h(x) + L_{g}h(x)u(x) \geq -\gamma(h(x)), \forall x \in \mathbb{R}^{n},
\end{equation}
where $L_{f}h(x)=\nabla h(x)^{\top}f(x)$ and $L_{g}h(x)=\nabla h(x)^{\top}g(x)$ denote the Lie derivatives of $h$ in the directions $f$ and $g$ respectively.  A function is class-$\mathcal{K}$ if it is continuous, strictly increasing, and $\gamma(0) = 0$. If the above conditions hold, then $h$ is called a valid CBF for \eqref{eq:control-affine} \cite{ames2019control}.

\subsection{Robust CBFs}
\label{subsec:robust-CBFs}

As introduced in \cite{emam2019robust}, RCBFs guarantee the safety of disturbed dynamical systems obeying the following differential inclusion
\begin{equation} 
    \label{eq:control-affine-disturbed}
    \dot{x}(t) \in f(x(t)) + g(x(t))u(x(t)) + D(x(t)), x(0) = x_{0} ,
\end{equation}
where $f$, $g$, $u$ are as in \eqref{eq:control-affine} and $D : \mathbb{R}^{n} \to 2^{\mathbb{R}^{n}}$ (the disturbance) is an upper semi-continuous set-valued map that takes nonempty, convex, and compact values. Note that $2^{\mathbb{R}^{n}}$ refers to the power set of $\mathbb{R}^{n}$, and that the assumptions made on the disturbance are conditions to guarantee the existence of solutions \cite{cortes2008discontinuous}.

We refer the reader to  \cite{glotfelter2017nonsmooth} for more details on the use of differential inclusions with barrier functions. Moreover, it was shown that for a specific form of $D$, we can recover a similar formulation of regular CBFs as in \eqref{eq:barrier-certificate-reg}
 with almost no additional computational cost as stated in the following theorem.  An important aspect of this theorem is that forward invariance is guaranteed for all trajectories of \eqref{eq:control-affine-disturbed}.
\begin{theorem}{\cite{emam2019robust}}
    \label{mainTheorem}
    Let $h : \mathbb{R}^{n} \to \mathbb{R}$ be a continuously differentiable function.  Let $\psi_{i} : \mathbb{R}^{n} \to \mathbb{R}^{n}$, $i \in \{1, \hdots, p\}$ be a set of $p > 0$ continuous functions, and define the disturbance $D : \mathbb{R}^{n} \to 2^{\mathbb{R}^{n}}$ as 
    \begin{equation}
        D(x') = \co \Psi(x') = \co\{\psi_1(x')\ldots\psi_p(x')\}, \forall x' \in \mathbb{R}^{n} .
    \end{equation}
    If there exists a continuous function $u : \mathbb{R}^{n} \to \mathbb{R}^{m}$ and a locally Lipschitz extended class-$\mathcal{K}$ function $\gamma : \mathbb{R} \to \mathbb{R}$ such that 
    \begin{equation} 
        \label{eq:mainTheorem}
            \begin{split}
                &L_{f}h(x') + L_{g}h(x')u(x') \geq \\
                & -\gamma(h(x')) - \min \nabla h(x')^{\top} \Psi(x'), \forall x' \in \mathbb{R}^{n} ,
            \end{split}
    \end{equation}
    then $h$ is a valid RCBF for \eqref{eq:control-affine-disturbed}.
\end{theorem}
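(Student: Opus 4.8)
The plan is to verify the definition of a valid RCBF directly, i.e., to show that the safe set $\mathcal{C} = \{x \in \R^{n} : h(x) \geq 0\}$ is forward invariant for \emph{every} solution of the differential inclusion \eqref{eq:control-affine-disturbed}. The crux is the observation that, along any such trajectory, the worst-case disturbance contribution to $\dot h$ is exactly the term $-\min \nabla h^{\top} \Psi$ appearing in \eqref{eq:mainTheorem}; once this is established, a scalar comparison argument closes the proof.

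First I would fix an arbitrary solution $x : [0, t_{1}] \to \R^{n}$ of \eqref{eq:control-affine-disturbed} with $h(x(0)) \geq 0$. Such a solution is absolutely continuous, and since $h$ is continuously differentiable, the composition $y(t) := h(x(t))$ is absolutely continuous and hence differentiable at almost every $t \in [0, t_{1}]$. At each such $t$ the chain rule gives $\dot y(t) = \nabla h(x(t))^{\top} \dot x(t)$, and by \eqref{eq:control-affine-disturbed} there is a (measurably selected) point $d(t) \in D(x(t)) = \co\{\psi_{1}(x(t)), \ldots, \psi_{p}(x(t))\}$ with $\dot x(t) = f(x(t)) + g(x(t))u(x(t)) + d(t)$, so that
\begin{equation}
    \dot y(t) = L_{f}h(x(t)) + L_{g}h(x(t)) u(x(t)) + \nabla h(x(t))^{\top} d(t) .
\end{equation}

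Next I would bound the last term from below. Since $d \mapsto \nabla h(x(t))^{\top} d$ is linear and $D(x(t))$ is the convex hull of the finite set $\Psi(x(t))$, the minimum of this linear functional over $D(x(t))$ is attained at one of the points $\psi_{i}(x(t))$; hence $\nabla h(x(t))^{\top} d(t) \geq \min \nabla h(x(t))^{\top} \Psi(x(t))$. Substituting into the previous display and applying the hypothesis \eqref{eq:mainTheorem} at $x' = x(t)$ yields $\dot y(t) \geq -\gamma(y(t))$ for a.e.\ $t \in [0, t_{1}]$. Finally I would invoke the comparison lemma: $\gamma$ is locally Lipschitz, so the scalar problem $\dot z = -\gamma(z)$, $z(0) = y(0) \geq 0$, has a unique solution, and because $\gamma(0) = 0$ and $\gamma$ is strictly increasing this solution satisfies $z(t) \geq 0$ for all $t$ (identically zero if $y(0) = 0$, and strictly positive otherwise since it cannot reach $0$ in finite time). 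The comparison principle then gives $y(t) \geq z(t) \geq 0$, i.e., $x(t) \in \mathcal{C}$ for all $t \in [0, t_{1}]$; since the trajectory was arbitrary, $\mathcal{C}$ is forward invariant and $h$ is a valid RCBF.

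The only delicate points, which I would dispatch by appealing to standard results rather than reproving them, are the existence and measurability of the selection $t \mapsto d(t)$ from the set-valued right-hand side, and the validity of the a.e.\ chain rule and the comparison inequality for \emph{absolutely continuous} (not $C^{1}$) trajectories; these are precisely the technical ingredients supplied by the differential-inclusion and nonsmooth-barrier literature cited in the excerpt (e.g.\ \cite{cortes2008discontinuous, glotfelter2017nonsmooth}). I expect this measure-theoretic bookkeeping, rather than any of the algebra, to be the main obstacle.
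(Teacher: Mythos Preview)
The paper does not actually prove this theorem: it is quoted as background from \cite{emam2019robust} and used as a black box (in particular, Proposition~\ref{prop:robustness} explicitly invokes the forward-invariance conclusion of Theorem~\ref{mainTheorem} rather than rederiving it). So there is no in-paper proof to compare against.

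That said, your argument is correct and is exactly the standard route one would expect the cited source to take: reduce to the scalar inequality $\dot y \geq -\gamma(y)$ along every Carath\'eodory solution by noting that the linear functional $d \mapsto \nabla h(x)^{\top} d$ attains its minimum over $\co\Psi(x)$ at a vertex, then apply a comparison principle. The same convex-hull/vertex observation is the one step the paper itself uses in its proof of Proposition~\ref{prop:robustness}, where from \eqref{eq:robustnessIneq} it passes directly to $\nabla h(x)^{\top}\dot x \geq -\gamma(h(x))$ for all $\dot x \in f(x)+g(x)u(x)+D(x)$; your proof makes that step explicit and then handles the forward-invariance conclusion that the paper simply imports. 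Your caveats about measurable selections and the a.e.\ chain rule for absolutely continuous trajectories are the right technical points to flag, and the references you cite are the appropriate ones.
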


Note that similarly to \cite{xu2015robustness}, in this paper, we prove the asymptotic stability of the safe set of RCBFs. Moreover, since the main contribution of this paper is a MRTA framework that accounts for environmental disturbances using differential inclusions and RCBFs, in the next section, we present the MRTA framework from \cite{emam2020adaptive} which we build upon. 

\section{Adaptive Task Allocation and Execution for Heterogeneous Robot Teams}
\label{sec:adaptiveTA}
This section briefly presents the adaptive task allocation and execution framework for heterogeneous multi-robot teams introduced in \cite{emam2020adaptive, notomista2019optimal} and discusses its shortcomings. For the remainder of this section, we consider a multi-robot team consisting of $N$ robots which are to execute $M$ tasks, denoted by $T_j$ for $j\in \{1,\ldots,M\} \overset{\Delta}{=} \mc M$. For the sake of generality, we assume that each robot $i$, where $i\in \{1,\ldots,N\} \overset{\Delta}{=} \mc N$, in the multi-robot system can be modeled as the following control-affine dynamical system 
$\dot x_i = f(x_i) + g(x_i) u_i$,
where $x_i \in \mathbb{R}^n$ is the state, $u_i  \in \mathbb{R}^m$ is the input, and $f$ and $g$ are as in \eqref{eq:control-affine}. Note that here, and for the remainder of the section, we omit the dependence of the different variables on time for brevity.

\subsection{Single-Robot Multi-Task Execution}
\label{subsec:SR-MT}
 In \cite{notomista2018constraint}, a formulation for the execution of tasks was developed, which encoded the completion of each task $j$ by robot $i$ as the safe-set of a CBF $h_{ij}$. Thus, the control signal $u_i(t)$ needed to execute the task can be obtained by solving the following optimization problem at all time
\begin{equation}
\label{eqn:const_opt}
\begin{aligned} 
\minimize_{u_i,\delta_{ij}}~~&\|u_i\|^2 + \delta_{ij}^2 \\
\subjto~~&L_f h_{ij}(x_i) + L_g h_{ij}(x_i) u_i \geq -\gamma(h_{ij}(x_i)) - \delta_{ij},
\end{aligned}
\end{equation}
where $\delta_{ij}$ is a slack variable which represents the extent to which the constraint corresponding to the task execution of task $T_j$ can be violated and $\gamma$ is an extended class-$\mathcal{K}$ function \cite{cbftutorial}. Note here that the control input $u_i$ is computed with respect to a pre-defined model of the dynamics. As such, in the case where environmental disturbances are present, the resulting control input may yield poor performance. This constitutes the first motivation for learning a real-time model of the disturbances. 

To extend the framework above to the multi-task case, the authors in \cite{notomista2018constraint} introduce constraints on the slack variables $\delta_{ij}$ to allow the robots to prioritize some tasks over others. To achieve this, they introduce the variable $\alpha_i = [\alpha_{i1},\ldots,\alpha_{iM}]^T \in \{0,1\}^M$, whose entries indicate the priorities of the tasks for robot $i$ (i.e. $\alpha_{im} = 1$ $\Longleftrightarrow$ task $T_m$ has the highest priority for robot $i$). Therefore, the prioritization is achieved by ensuring that the following implication holds
\begin{equation}\label{eq:alphadelta}
\alpha_{im} = 1 \quad\Rightarrow\quad \delta_{im} \leq \frac{1}{\kappa}\delta_{in}\quad \forall n\in \mathcal{M},~n\neq m,
\end{equation}
where $\kappa > 1$ allows us to encode how the task priorities impact the relative effectiveness with which robots perform different tasks.

\subsection{Multi-Robot Multi-Task Allocation and Execution}

Finally, to extend the single-robot multi-task execution to multiple robots, the authors in \cite{notomista2019optimal} combine the optimization problems \eqref{eqn:const_opt} for each robot and add a task allocation term in the cost that allows the team to reach a desired global specification, resulting in the following optimization problem
\begin{subequations} \label{eq:allocationalgorithm}
\vspace{-5mm}
\begin{align}
\minimize_{u,\delta,\alpha} ~& C\|\pi^* - \pi_h(\alpha)\|_{T}^2 + \sum_{i = 1}^{N} \Big( \|u_i\|^2 + l \|\delta_i \|_{S_i}^2 \Big) \label{eq:miqp:a}\\ 
\subjto~~&L_f h_{m}(x) + L_g h_{m}(x) u_i \geq -\gamma(h_{m}(x)) - \delta_{im} \label{eq:miqp:b} \\
&\mc P\delta_i \le \Omega(\alpha_i) \label{eq:miqp:c}\\
&|u_i| \leq u_\text{max} \label{eq:miqp:d}\\
&\hspace{3cm}\forall i \in \mc N,~\forall n,m \in \mc M.
\end{align}
\noeqref{eq:miqp:a}\noeqref{eq:miqp:b}\noeqref{eq:miqp:c}
\end{subequations}
\hspace{-0.2cm}where $x_i$ and $u_i$ denote the state and input of robot $i$ respectively, and $x = [ x_1 \tr x_2 \tr \ldots x_N\tr]$ denotes the ensemble state. The term $\|\pi^* - \pi_h(\alpha)\|_{T}^2$ serves to steer the distribution of the robots over the tasks, encoded by  $\pi_h \colon \mathbb{R}^{N \times M} \rightarrow \mathbb{R}^{M}$, towards a desired distribution $\pi^*$.  In \eqref{eq:miqp:a}, $C$ and $l$ are scaling constants allowing for a trade-off between the global specifications term  and allowing individual robots to expend the least amount of energy possible. The term $\|\delta_i \|_{S_i}^2$ accounts for the heterogeneity of the robot team by not penalizing robots' slack variables corresponding to tasks for which they are not suitable. This is achieved by leveraging the specialization $s_{ij} \in [0,1]$ of robot $i$ towards task $j$ as described in \cite{emam2020adaptive}. The constraint \eqref{eq:miqp:c}, where $\mc P \in \mathbb{R}^{q \times m}$, $\Omega \colon \mathbb{R}^m \rightarrow \mathbb{R}^{q}$  and $q$ denotes the number of desired constraints, encodes the relation described in \eqref{eq:alphadelta}. Finally, \eqref{eq:miqp:d} encodes the control limits of the robots. We refer the reader to \cite{notomista2019optimal} for more details. It is important to note that, since the allocation and execution of tasks are intertwined, environmental disturbances can not only inhibit the ability of a robot to accomplish a task but can also negatively affect its allocation. 

\subsection{Adaptive Specialization for Dynamic Environments}
\label{subsec:adaptiveTA}

The specialization parameters $s_{ij}$ which encode the effectiveness of robot $i$ at performing task $j$, are updated at fixed intervals $dt$ through the following update law
\begin{equation}
    \label{eq:spUpdate1}
    s_{ij}[k + 1]= \text{min}(\text{max}(s_{ij}[k] + \beta_{1} \alpha^*_{ij}[k]  \Delta h_{ij}[k],0),  1),
\end{equation}
where $\beta_{1} \in \R_{>0}$ is a constant controlling the update rate and $\alpha^*_{ij}[k]$ is obtained from the solution of the optimization program \eqref{eq:allocationalgorithm} at time step $k$. Note that the update only occurs for tasks to which the robots are assigned since $\alpha_{ij}[k] = 1$ if and only if robot $i$ is assigned to task $j$ at time step $k$. Lastly, the difference between the modeled and actual progresses is given by 
$\Delta h_{ij}[k] = h_{ij}(x_i^{\text{act}}[k]) - h _{ij}(x_i^{\text{sim}}[k])$,
where $h_{ij}(x_i^{\text{sim}}[k])$ and $h_{ij}(x_i^{\text{act}}[k])$ are the simulated and actual cost function values of agent $i$ for task $j$ at step $k$. The simulated state is obtained as follows
\begin{align}
    \label{eq:x_sim}
    x_i^{\text{sim}}[k] &= x_i^{\text{act}}[k-1] \\  &+  (f(x_i^{\text{act}}[k-1]) + g(x_i^{\text{act}}[k-1])u_i^{\ast}[k-1])dt ,
\end{align}
where $x_i^{\text{act}}[k]$ denote the actual state of robot $i$ and $x_i^{\text{sim}}[k]$ denote the simulated state which assumes that the robot obeyed its nominal dynamics. Here, again, the update law relies on the simulated progress which in turns leverages the pre-defined dynamics models. Since the update law is applied at each time step, any disturbance will thus cause drastic changes in the specializations over long duration. As such, the framework is incapable of distinguishing between disturbances with different severeties. 

Overall, as discussed in this section, the adaptive task allocation and execution framework relies heavily on a pre-defined dynamics model of the robots. This model is used for task execution, affects the allocation of tasks amongst the robots, and is also leveraged in the adaptive specialization update law. Consequently, it is clear that disturbances causing non-negligible changes in the dynamics can severely affect the performance of the framework. As such, in the next section, we propose a new version of the framework, that leverages GPs, differential inclusions, and RCBFs to learn and incorporate the disturbances.

\section{Data-driven adaptive MRTA}
\label{sec:adaptiveTA-robustCBF}

This section contains the main contribution of the paper: a data-driven adaptive task allocation and execution framework that explicitly accounts for the environmental disturbances in the dynamics models. As mentioned in the previous section, in the original framework proposed in \cite{emam2020adaptive}, unmodeled disturbances affecting the robots can degrade the quality of the allocation and execution of tasks as well as the update of the specializations. This drawback can be alleviated by explicitly accounting for the disturbances in the robot dynamics model, and in turn using the new model for task execution and updating the specializations, which is what we accomplish in this section.


\subsection{Modelling and Learning the disturbance}
\label{subsec:learning_disturbance}
In this subsection, we demonstrate how we can leverage GPs and differential inclusions to learn the environmental disturbances and model the disturbed dynamics.
We assume that each robot can be modelled by the differential inclusion as in \eqref{eq:control-affine-disturbed}: 
    $\dot{x}_i(t) \in f(x_i(t)) + g(x_i(t))u_i(x_i(t)) + D_i(x_i(t))$, 
where the disturbance set $D_i$ is a convex hull of $p$ continuous functions as in Theorem~\ref{mainTheorem}
\begin{equation}
    D_i(x) = \co \Psi_i(x) = \co\{\psi_{i1}(x)\ldots\psi_{ip}(x)\}, \forall x \in \mathbb{R}^{n}.
\end{equation}
Note that the choice of modelling the disturbance as additive (i.e. does not multiply the control input) is not necessary. A multiplicative disturbance can be used with only minor changes to the proposed approach.

Moreover, as mentioned above, the environmental disturbances acting on the robots may be unknown apriori. Therefore, it is  necessary to learn $D_i \;  \forall i$. Towards this goal, we propose learning the disturbance using GPs similarly to \cite{wang2018safe}. This is achieved by collecting data points to construct a dataset for each robot $\mathcal{D}_i = \{x_i^{(k)}, y_i^{(k)}\}^{n_d}_{k=1}$ where the labels $y_i^{(k)}$ are given by
\begin{equation}
    y_i^{(k)} = \hat{\dot{x}}_i^{(k)} - f(x_i^{(k)}) + g(x_i^{(k)})u_i^{(k)}, \; \forall k,
\end{equation}
and $\hat{\dot{x}}_i^{(k)}$ is the measured velocity of the state of robot $i$ for data-point $k$. Note that $k$ here is the index of a data-point in the dataset, and does not encode any relation to time. Since $y_i^{(k)} \in \mathbb{R}^n$, we train one GP per dimension for a total of $n$ GP models, and the disturbance estimate for a query point $x$ is obtained as
\begin{equation}
\label{eq:disturb_estimate}
[\bar{D}_i(x)]_d = \mu_{i,d}(x) + [-k_c \sigma_{i,d}(x), k_c \sigma_{i,d}(x)],
\end{equation}
where $\mu_{i,d}(x)$ and $\sigma_{i,d}(x)$ are robot $i$'s $d$th dimension mean and standard-deviation predictions for query point $x$ and $k_c$ is a confidence parameter (e.g. $k_c = 2$ indicates a confidence of $95.45\%$). Note that the disturbance estimate $\bar{D}_i(x)$ is a vector where each entry is a convex-hull. To recover the representation needed for Theorem~\ref{mainTheorem}, we define $\Psi_i(x)$ as the $2^n$ extrema of $\bar{D}_i(x)$.

\subsection{Execution of Tasks under Environmental Disturbances}
\label{subsec:task_exec_robust}
 Given that the disturbed dynamics are modelled using differential inclusions, regular CBFs can no longer be utilized for task execution as in the original framework. Therefore, we leverage RCBFs to ensure the execution of the tasks with respect to the disturbed dynamics. Note, however, that Theorem~\ref{mainTheorem} states conditions for a function to be a valid RCBF (i.e., to render its super-zero level set forward invariant), but does not claim  stability of the safe-set. Therefore, since this property is critical to the execution of tasks using CBFs, in order to guarantee the execution of tasks, we first prove that the super-zero level set of RCBFs is asymptotically stable similarly to the work in \cite{xu2015robustness}. More specifically, since differential inclusions consider sets of trajectories, we prove that all possible trajectories are asymptotically stable.

\begin{proposition}
\label{prop:robustness}
    Let the function $h$ and the disturbance $D$ be as in Theorem~\ref{mainTheorem}. If there exists a continuous function $u : \mathbb{R}^{n} \to \mathbb{R}^{m}$ and a locally Lipschitz extended class-$\mathcal{K}$ function $\gamma : \mathbb{R} \to \mathbb{R}$ such that 
    \begin{equation} 
        \label{eq:robustnessIneq}
            \begin{split}
                & L_{f}h(x') + L_{g}h(x')u(x') \geq \\
                & -\gamma(h(x')) - \min \nabla h(x')^{\top} \Psi(x'), \forall x' \in \mathbb{R}^{n} ,
            \end{split}
    \end{equation}
    then $\mathcal{C}$ is asymptotically stable.
\end{proposition}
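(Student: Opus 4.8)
The plan is to lift the classical barrier-function robustness argument of \cite{xu2015robustness} to the differential-inclusion setting \eqref{eq:control-affine-disturbed}, so that asymptotic stability of $\mathcal{C}$ is obtained along \emph{every} solution rather than along a single one. The skeleton is: (i) reduce an arbitrary trajectory of the inclusion to an ODE driven by a measurable disturbance selection; (ii) differentiate $h$ along that trajectory and use the RCBF inequality \eqref{eq:robustnessIneq} to extract the scalar differential inequality $\dot h \geq -\gamma(h)$; (iii) invoke the comparison lemma and analyze the scalar comparison system $\dot y = -\gamma(y)$, whose origin is globally asymptotically stable because $\gamma$ is extended class-$\mathcal{K}$; (iv) translate this back into Lyapunov stability and attractivity of $\mathcal{C}$.

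Concretely, I would first fix an arbitrary absolutely continuous solution $x:[0,t_1)\to\R^n$ of the closed-loop inclusion $\dot x(t) \in f(x(t)) + g(x(t))u(x(t)) + D(x(t))$. Since $D$ is upper semi-continuous with nonempty, convex, compact values, a measurable-selection theorem produces a measurable $d(\cdot)$ with $d(t)\in D(x(t))$ and $\dot x(t) = f(x(t)) + g(x(t))u(x(t)) + d(t)$ for a.e.\ $t$. Because $h\in C^1$ and $x$ is absolutely continuous, $t\mapsto h(x(t))$ is absolutely continuous, so for a.e.\ $t$
\begin{equation}
  \tfrac{d}{dt}h(x(t)) = L_f h(x(t)) + L_g h(x(t))u(x(t)) + \nabla h(x(t))^{\top}d(t).
\end{equation}
Since $d(t)\in\co\Psi(x(t))$ and $v\mapsto\nabla h(x(t))^{\top}v$ is linear, its minimum over the polytope $\co\Psi(x(t))$ is attained at one of the generating points, hence $\nabla h(x(t))^{\top}d(t)\geq\min\nabla h(x(t))^{\top}\Psi(x(t))$. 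Substituting \eqref{eq:robustnessIneq} evaluated at $x'=x(t)$ then yields $\tfrac{d}{dt}h(x(t))\geq-\gamma(h(x(t)))$ for a.e.\ $t$.

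Next I would apply the comparison lemma: letting $y(\cdot)$ solve $\dot y=-\gamma(y)$ with $y(0)=h(x(0))$ (well posed since $\gamma$ is locally Lipschitz), we get $h(x(t))\geq y(t)$ on the common interval of existence. For $\dot y=-\gamma(y)$ the origin is globally asymptotically stable --- $\gamma$ extended class-$\mathcal{K}$ makes $-\gamma(y)$ oppose the sign of $y$, with $V(y)=\tfrac12 y^2$ a strict Lyapunov function --- so $y(t)$ is monotone and $y(t)\to 0$. Therefore: if $h(x(0))\geq 0$, forward invariance (Theorem~\ref{mainTheorem}) keeps $x(t)\in\mathcal{C}$; if $h(x(0))<0$, then $\tfrac{d}{dt}h(x(t))\geq-\gamma(h(x(t)))>0$ whenever $h(x(t))<0$, so $h(x(t))$ increases monotonically and, being bounded below by $y(t)\uparrow 0$, either reaches $\mathcal{C}$ in finite time or satisfies $h(x(t))\to 0^-$. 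The bound $h(x(t))\geq y(t)$ together with Lyapunov stability of the comparison origin and continuity of $h$ gives the stability part, and the above monotonicity gives attractivity; to convert $h(x(t))\to 0^-$ into $\operatorname{dist}(x(t),\mathcal{C})\to 0$ I would invoke the standing regularity of $h$ near $\partial\mathcal{C}$ (nonvanishing gradient, equivalently that $0$ is a regular value, or boundedness of sublevel sets), as in \cite{xu2015robustness}. Since $x(\cdot)$ was arbitrary, $\mathcal{C}$ is asymptotically stable for all trajectories of \eqref{eq:control-affine-disturbed}.

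The hard part will not be any new inequality --- step (ii) is essentially the computation behind Theorem~\ref{mainTheorem} --- but the differential-inclusion bookkeeping in steps (i) and (iii): justifying the measurable selection and the a.e.\ chain rule for solutions that are merely absolutely continuous and possibly non-unique, and checking that the comparison lemma still applies so that the pointwise estimate $\nabla h^{\top}d(t)\geq\min\nabla h^{\top}\Psi(x(t))$ --- exactly where the RCBF condition is used --- holds uniformly along every trajectory. A secondary subtlety is the final topological step from $h\to 0^-$ to convergence of the distance to $\mathcal{C}$, which, as in the classical CBF references, requires a mild regularity assumption on $h$ at the boundary of the safe set.
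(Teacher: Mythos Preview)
Your argument is correct and is in the same spirit as the paper's, but the route differs. The paper constructs the nonsmooth Lyapunov function $V(x)=\max\{-h(x),0\}$, shows from \eqref{eq:robustnessIneq} that $\dot V(x)\le\gamma(-V(x))<0$ for $x\notin\mathcal{C}$ along every admissible velocity in the inclusion, and then invokes the set-stability result for differential inclusions (Theorem~2.8 of \cite{lin1996smooth}), together with a smoothing step (Proposition~4.2 of \cite{lin1996smooth}) to handle the lack of smoothness of $V$ on $\partial\mathcal{C}$. You instead fix a single trajectory, use a measurable selection to reduce to an ODE, extract the scalar inequality $\dot h\ge-\gamma(h)$, and apply the comparison lemma against $\dot y=-\gamma(y)$. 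Both arrive at the same differential inequality and both ultimately need a regularity hypothesis on $h$ near $\partial\mathcal{C}$ to pass from $h(x(t))\to 0^-$ to $\operatorname{dist}(x(t),\mathcal{C})\to 0$; you state this explicitly, whereas the paper absorbs it into the hypotheses of the cited theorem. Your approach is more self-contained and makes the differential-inclusion bookkeeping (absolute continuity, a.e.\ chain rule, the polytope minimum) explicit, at the price of some length; the paper's is shorter but leans on an external reference and the smoothing lemma to close the argument.
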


\begin{proof}
Define the Lyapunov function 
\begin{align}
    V(x) &= 0, && \forall x\in \mathcal{C},\\
         &= -h(w) && o.w..
\end{align}
Then, from \eqref{eq:robustnessIneq}, we obtain that $\forall x \in \mathbb{R}^n \setminus \mathcal{C}$ 
    \begin{equation} 
            \begin{split}
                & \nabla h(x)^{\top}(f(x) + g(x)u(x)) + \min \nabla h(x)^{\top} \Psi(x)\geq \\
                & -\gamma(h(x)),
            \end{split}
    \end{equation}
which is equivalent to
\begin{align}
    &\nabla h(x)^{\top}\dot{x} \geq - \gamma(h(x)), \\
    & \; \; \; \forall \dot{x} \in f(x) + g(x)u(x) + D(x),
\end{align}
hence $\dot{V}(x) \leq \gamma(-V(x))$, since $V(x) > 0 \; \forall x \in \mathbb{R}^n \setminus \mathcal{C}$, then we get $\dot{V}(x) < 0$. Combining the latter with the forward invariance of $\mathcal{C}$ from Theorem~\ref{mainTheorem} and  the result of Theorem 2.8 from \cite{lin1996smooth} we obtain that $\mathcal{C}$ is globally asymptotically stable for all trajectories. Note that although Theorem 2.8 from \cite{lin1996smooth} requires the function $V$ to be smooth, the function can be smoothed using Proposition 4.2 in \cite{lin1996smooth}.
\end{proof}
The result in Proposition~\ref{prop:robustness} allows us to encode the execution of tasks as a constraint, similarly to the undisturbed system as
\begin{align}
\label{eq:miqp-b-robust}
L_f h_{ij}(x_i) +& L_g h_{ij}(x_i) u_i \geq  \\ &-\gamma(h_{ij}(x_i)) - \min \nabla h_{ij}(x_i)^{\top} \Psi(x_i).
\end{align}
Note that accounting for the disturbance shrinks the size of the set of control inputs that can satisfy the inequality. Moreover, in cases where the disturbance is large, it may be the case that the robots cannot satisfy the inequality while respecting their control limits. This indicates that the disturbance is too large for the robot to handle and as such the robot is incapable of executing the task. To encode this, we limit the magnitude of the learned disturbance. Specifically, we enforce $||\psi_{ij}(x)||_{\infty} \leq d_{\text{max}}, \; \forall j \in \{1,\ldots, p\}$, where $d_{\text{max}}$ is a user-defined threshold. The latter allows the robots to only execute tasks for which they can overcome the disturbance, while also accounting for disturbances that the robots cannot overcome through the adaptive specialization law  as discussed in the next subsection.
Regarding the incurred computational complexity, since the term $\min \nabla h_{m}(x)^{\top} \Psi(x)$ does not depend on the control input $u$, the additional cost is $O(Np)$, where $N$ is the number of robots, and $p$ is the number of points forming the convex hull of the disturbance.

\subsection{Adaptive Specialization using Differential Inclusions}
\begin{figure}[t]
    \centering
    \includegraphics[width=0.98\linewidth]{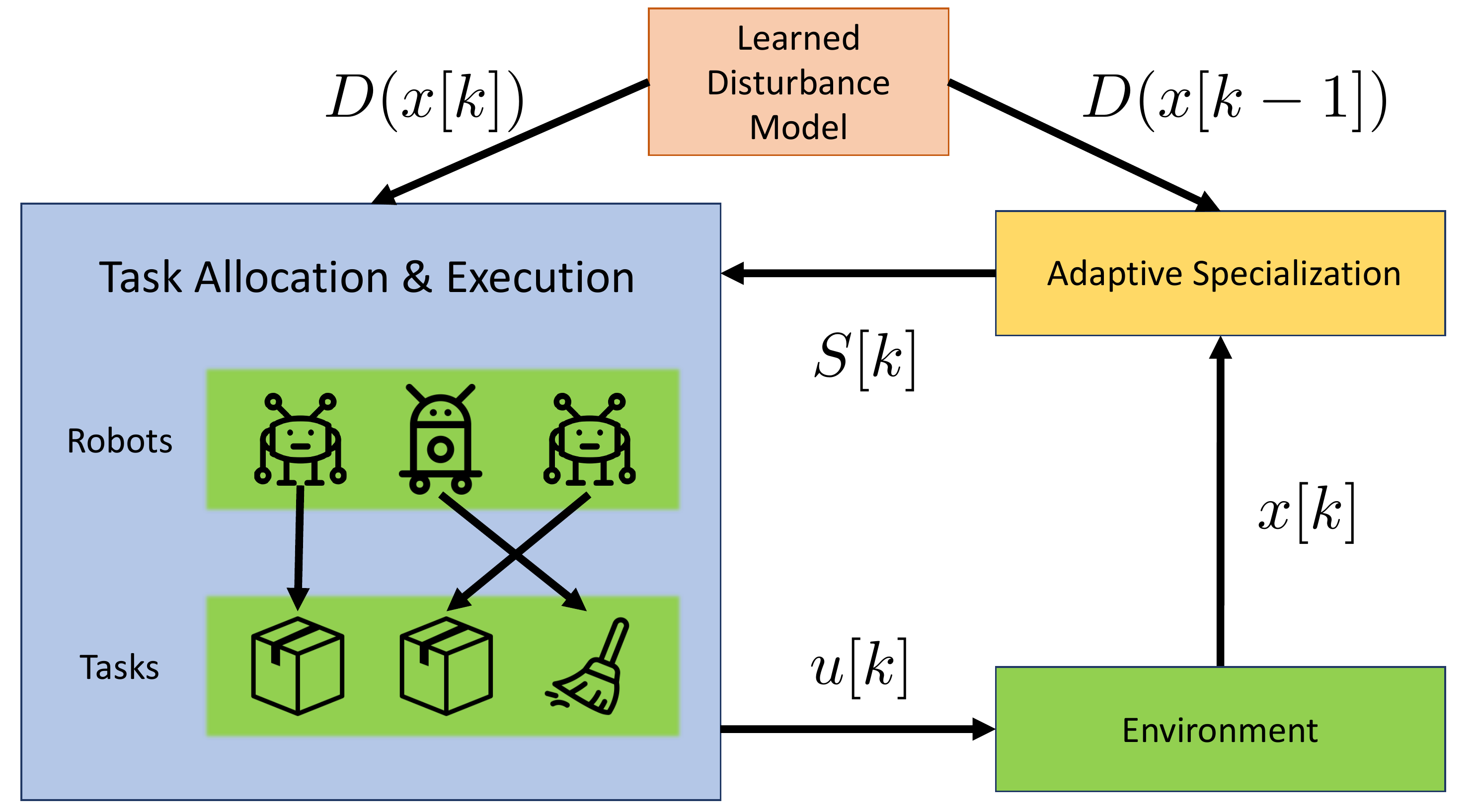}
    \caption{High-level diagram of the proposed framework. The main optimization program solved is denoted by the blue box, where the task execution is encoded using the RCBFs obtained from the data-driven models (in our case GPs), and specializations from the adaptive specialization update. The specialization update is computed using the disturbance from the last step to approximate the nominal progress, along with the current state to estimate the actual progress.}
    \label{fig:high-level-diagram}
\end{figure}

The adaptive specialization law \eqref{eq:spUpdate1} depends on a model of the robots' dynamics that does not account for environmental disturbances. Hence, as reflected in \eqref{eq:x_sim}, if the robot consistently performs worse than expected with respect to achieving a certain task, its specialization towards that task would diminish to zero. In other words, the update law is unable to differentiate between environmental disturbances that the robot can and cannot overcome. Therefore, to account for the environmental disturbances when computing the modelled progress, we propose to update the specializations by leveraging the differential inclusions learned using the procedure described in Section~\ref{subsec:learning_disturbance}. 

Since each robot is modelled through a differential inclusion, there are several possible trajectories which the robot can take, allowing for numerous modelling options. For example, the modelled progress in \cite{emam2020adaptive} is with respect to the undisturbed trajectory; however, another valid choice would be to compute the trajectory with respect to the expected disturbance. We make the conservative choice of focusing on the worst-case trajectory (i.e. the trajectory that results in the least progress), yielding an update law given by 
\begin{equation} \label{eq:spUpdate-robust}
    s_{ij}[k + 1]= \text{min}(\text{max}(s_{ij}[k] + \beta_{1} \alpha^*_{ij}[k]  \Delta h^{\text{rob}}_{ij}[k], 0), 1),
\end{equation}
where the difference in progress is given by 
    \begin{equation}
    \label{eq:deltah-rob}
     \Delta h^{\text{rob}}_{ij}[k] = h_{ij}(x_i^{\text{act}}[k]) - \min h_{ij}(x_i^{\text{sim}}[k]),   
    \end{equation}
\vspace{-2mm}
such that
\begin{align}
    \label{hstar-robust}
     \min h_{ij}(x_i^{\text{sim}}[k]) &= h_{ij}(x_i) \\
                 & \;\;\; + dt \left[L_f h_{ij}(x_i) + L_g h_{ij}(x_i)u_i\right] \\
                 & \;\;\; + dt \min \nabla h_{ij}(x_i)^{\top} {\Psi(x_i)},
\end{align}
and $x_i$ and $u_i$ denote $x_i^{\text{act}}[k-1]$ and $u_i[k-1]$ respectively. Equation~\eqref{hstar-robust} is an approximation of the worst case trajectory using Euler integration. The latter can be shown through the following derivation
\begin{align}
    \min h(x(t+dt)) &\approx  \min \left[ h(x(t)) + dt \frac{d h}{d t}(x(t)) \right] \\
                 &= \min \left[h(x(t)) + dt \nabla h(x(t))^{\top}\dot{x}(t) \right] \\
                 &= h(x(t)) + dt \\
                 &\;\;\;\; (L_f h(x(t)) + L_g h(x(t))u(t))  \\
                 &\;\;\; + dt \min [  \nabla h(x(t))^{\top}{\Psi(x(t))} ].
\end{align} 
Here, as reflected above, we again account for a continuum of trajectories while benefiting from the negligible computational complexity incurred by taking the minimum over a convex hull of a set of points since the minimum must occur at one of the corner points. 

As discussed in the previous subsection, since the magnitude of the learned disturbance is thresholded, large disturbances which the robots cannot overcome would be accounted for using the proposed update law. Essentially, this process allows us to differentiate between disturbances that the robots can and cannot overcome. 

\begin{figure*}[tb]
\centering
\begin{minipage}{0.33\textwidth}
\includegraphics[width=\textwidth]{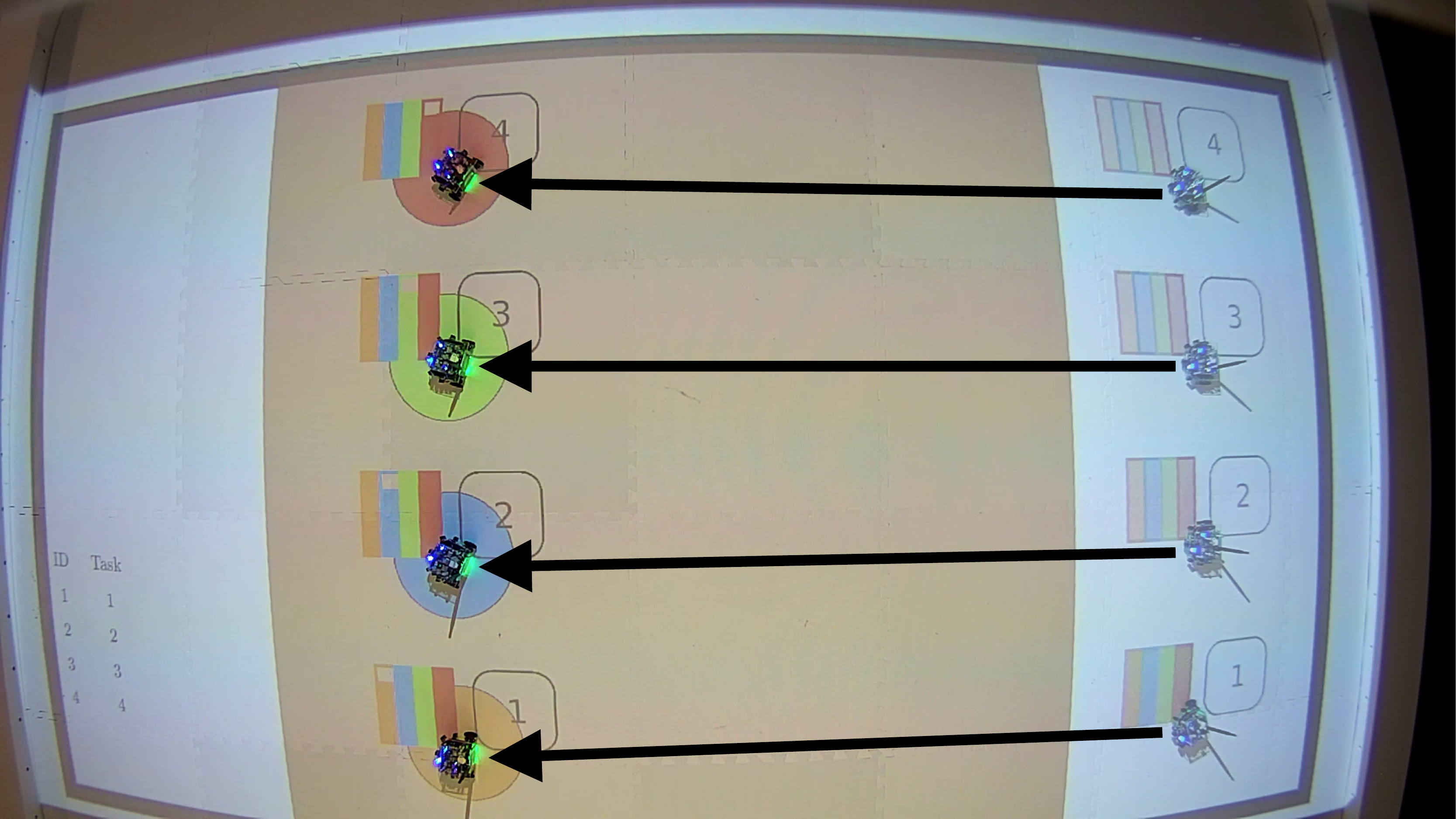}
\end{minipage}~%
\begin{minipage}{0.33\textwidth}
\includegraphics[width=\textwidth]{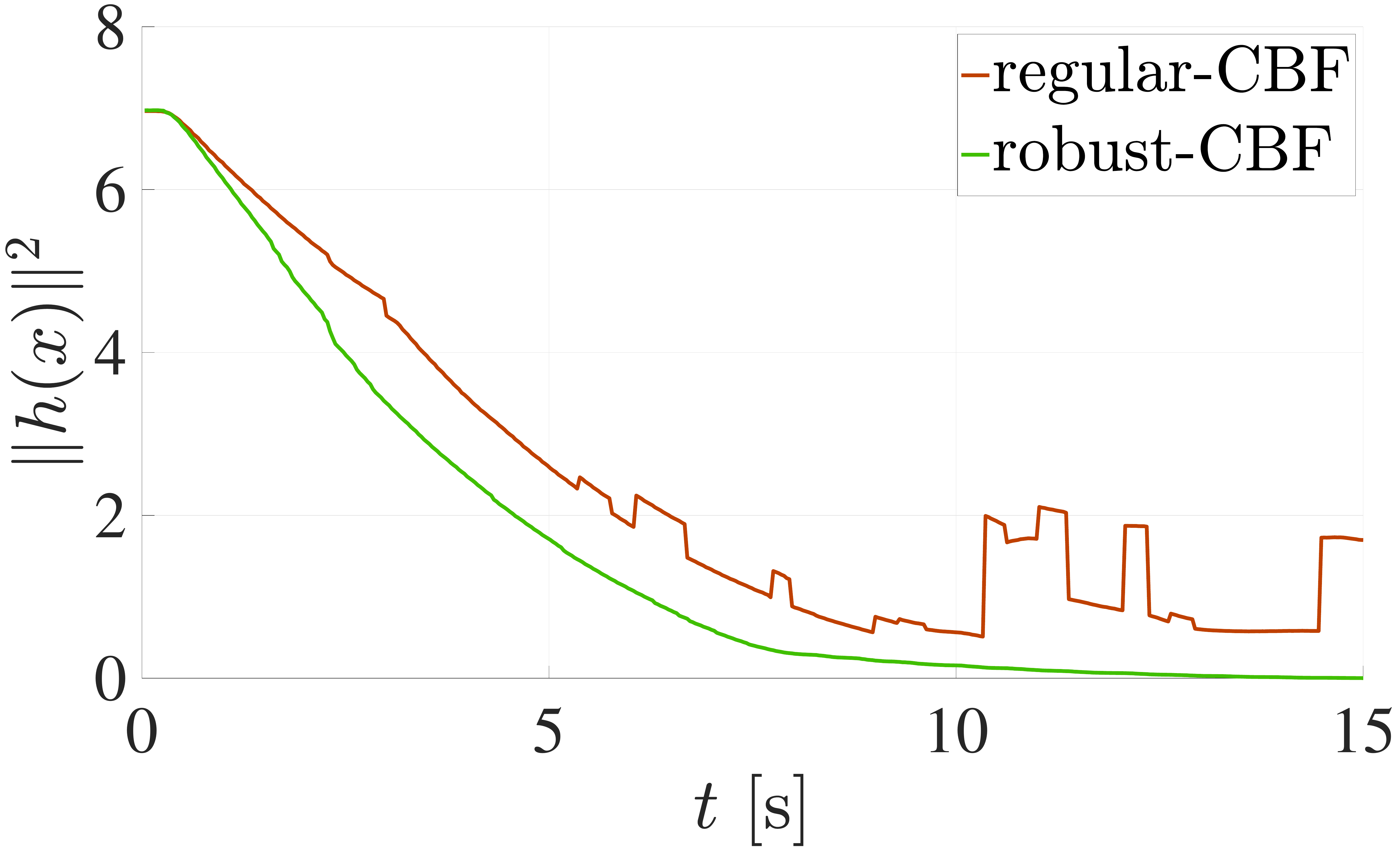}
\end{minipage}~%
\begin{minipage}{0.33\textwidth}
\includegraphics[width=\textwidth]{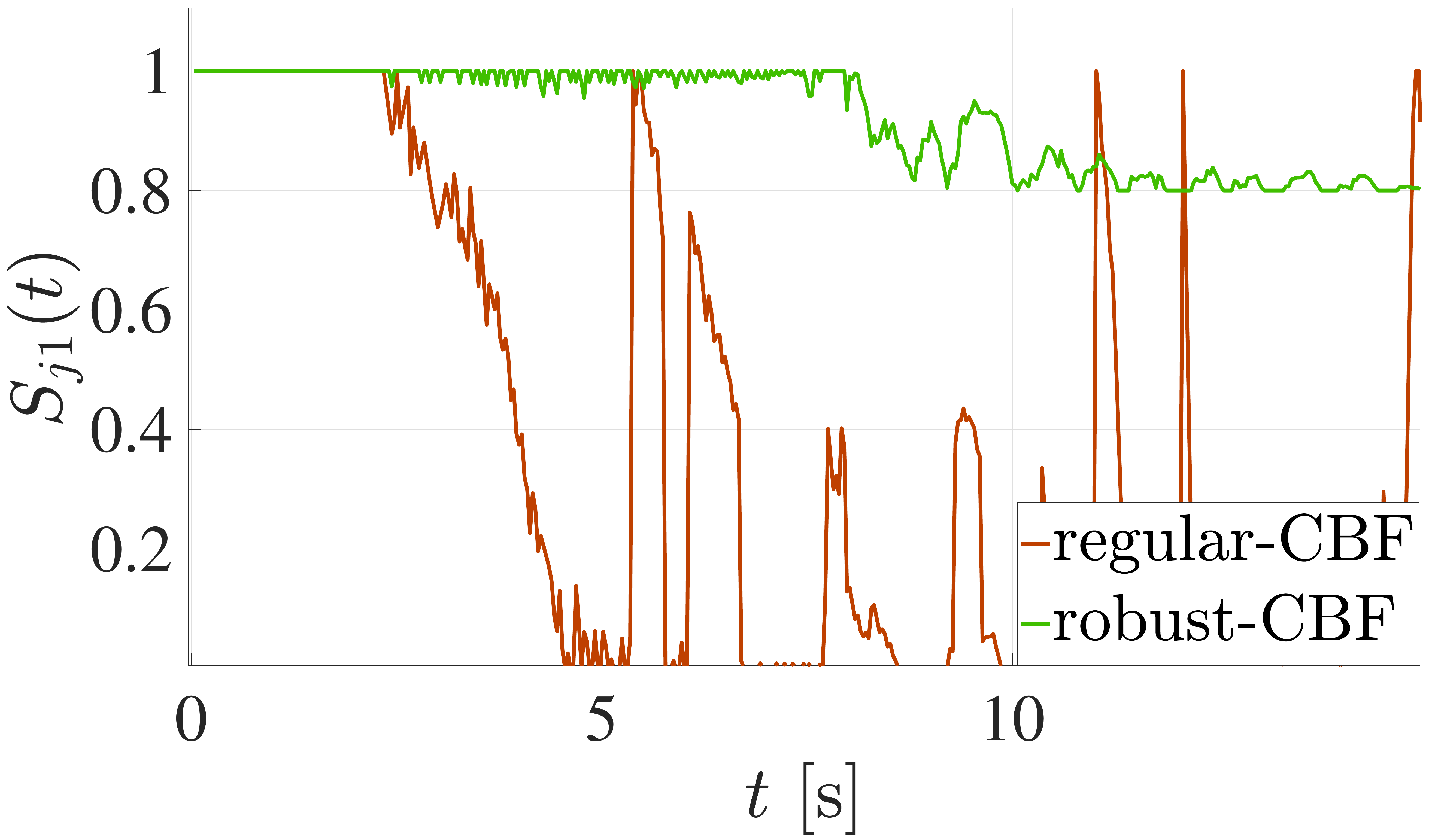}
\end{minipage}~%
\caption{The setup of experiment $1$ is shown on the left-hand side of the figure. A team of $4$ ground robots executes $4$ go-to-goal tasks (circles). The black arrows indicate the trajectories of the robots. The brown section of the arena induces a simulated disturbance on the robots by affecting the control inputs unknowingly to the framework. The sum of squares of the energy functions of the robots with respect to their assigned tasks is plotted over time (middle) for both regular and robust CBFs. Moreover, the specializations of robot $1$ towards its assigned task is plotted over time for each framework (right). As shown, the proposed framework significantly outperforms the baseline in both task execution and adapting the specialization. A video of the experiments is available at \texttt{\href{https://youtu.be/imeC-Eri\_nM}{https://youtu.be/imeC-Eri\_nM}}.}
\label{fig:exp1}
\end{figure*}

\begin{figure*}[tb]
\centering
\begin{minipage}{0.33\textwidth}
\includegraphics[width=\textwidth]{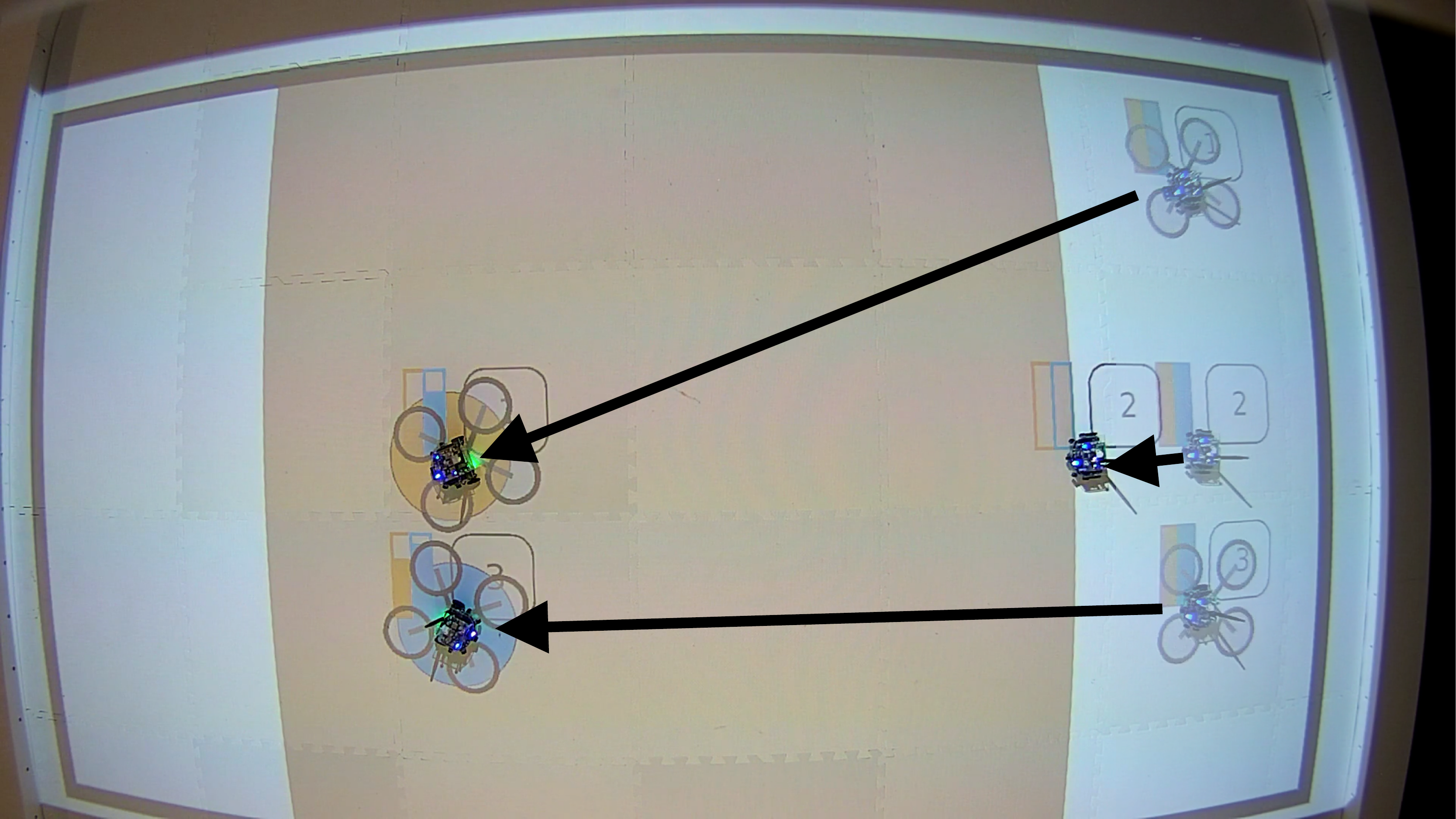}
\end{minipage}~%
\begin{minipage}{0.33\textwidth}
\includegraphics[width=\textwidth]{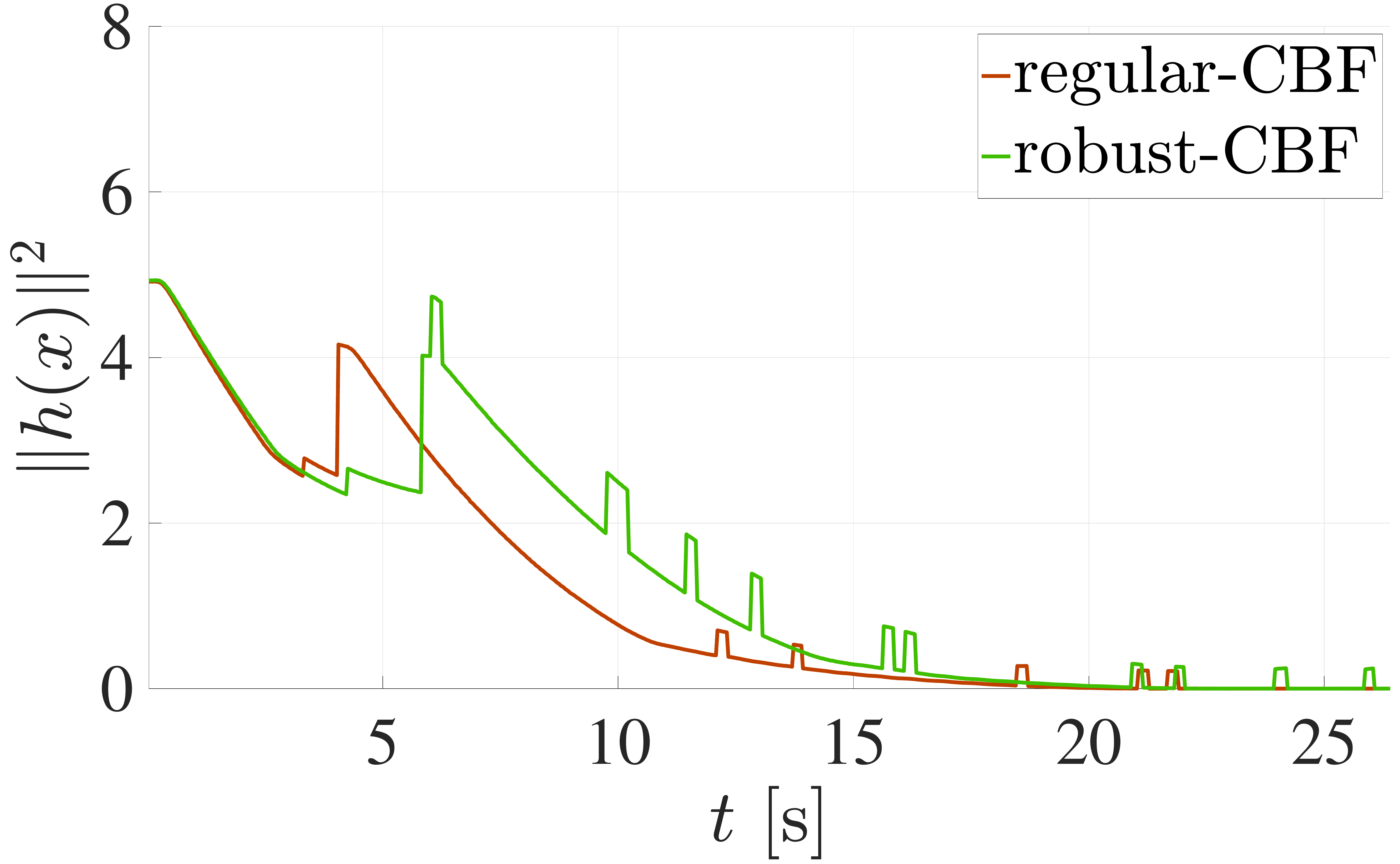}
\end{minipage}~%
\begin{minipage}{0.33\textwidth}
\includegraphics[width=\textwidth]{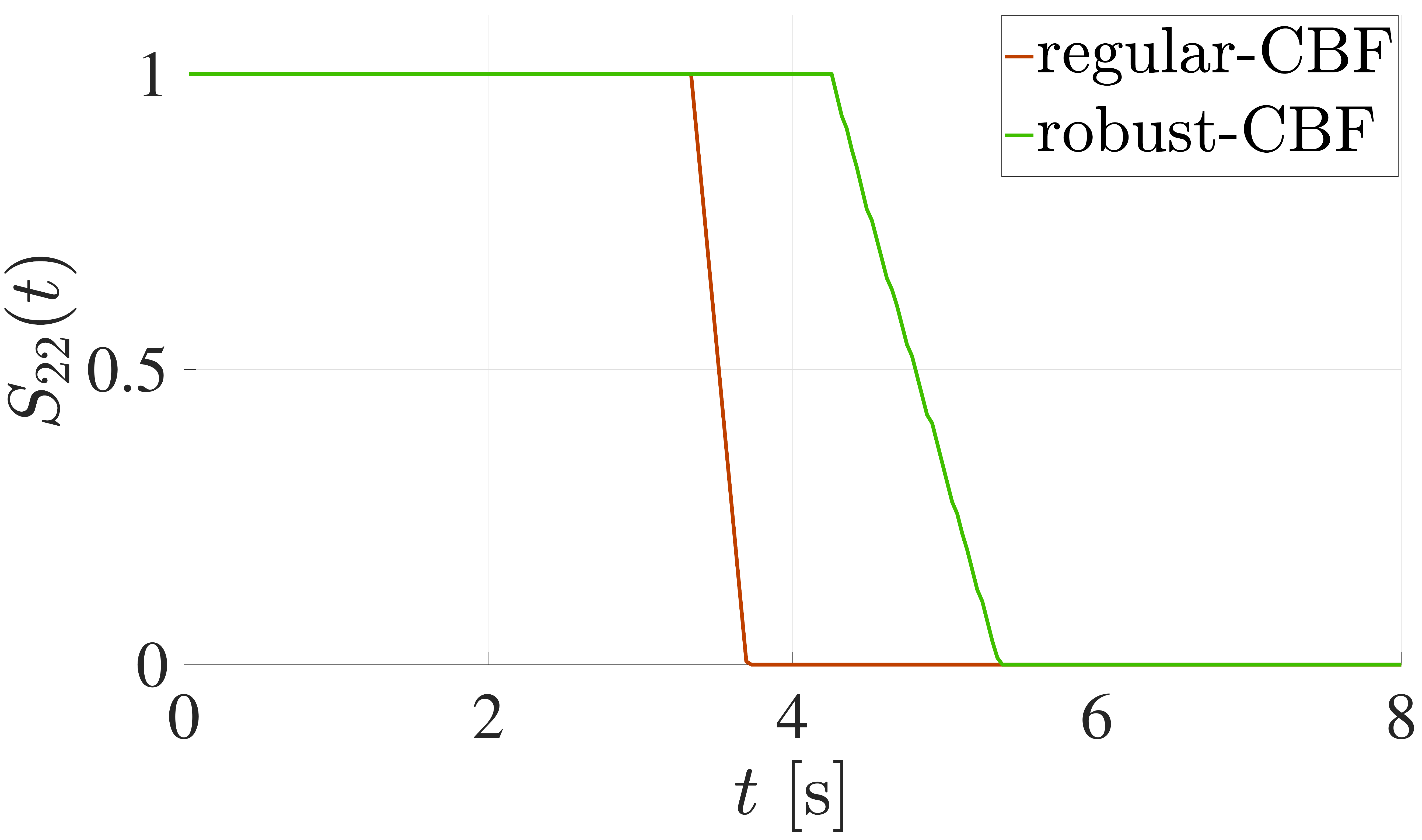}
\end{minipage}~%
 \caption{The setup of experiment $2$ is shown on the left-hand side of the figure. A team of $3$ robots, $1$ ground robot and $2$ simulated quadcopters, is to execute $2$ go-to-goal tasks (circles). Similarly to the first experiment, a simulated disturbance, which only affects the ground robot is induced in the brown region. However, in this experiment, the disturbance is very strong to the point where the ground robot cannot overcome it and reach its task. The sum of squares of the energy functions of the robots with respect to their assigned tasks is plotted over time (middle) for both regular and robust CBFs, as well as the specialization of the ground robot towards its first assigned task (right). As expected, the proposed framework is slower to adapt in this case.}
\label{fig:exp2}
\end{figure*}

Figure~\ref{fig:high-level-diagram} and Algorithm~\ref{alg:1} illustrate the proposed data-driven adaptive task allocation and execution framework. As we empirically showcase in the next section, integrating the learned disturbance models enables the framework to be robust to environmental disturbances during task execution and to update the specializations in an environment-resilient fashion.

\begin{algorithm}
	\caption{Data-Driven Adaptive MRTA}
	\label{alg:1}
 	\begin{algorithmic}[1]
	\While{true}
		\State Get robot states $x_i,\forall i$
		\State Compute inputs $u_i,\forall i$ \Comment \eqref{eq:allocationalgorithm} with \eqref{eq:miqp-b-robust} in lieu of \eqref{eq:miqp:b}
		\State Execute inputs $u_i, \forall i$  
		\ForAll{$i\in\{1,\ldots,n_r\}$}
			\ForAll{$j\in\{1,\ldots,n_t\}$}
				\State Evaluate $\Delta h^{\text{rob}}_{ij}[k]$ \Comment \eqref{eq:deltah-rob}
				\State Evaluate $s_{ij}[k+1]$ \Comment \eqref{eq:spUpdate-robust}
			\EndFor
		\EndFor
	\EndWhile
 	\end{algorithmic}
\end{algorithm}

\section{Experiments}
\label{sec:experiments}

We showcase the significant improvement obtained from the proposed framework as compared to the baseline from \cite{emam2020adaptive} in two experiments on the Robotarium \cite{pickem2017}. In each experiment, we consider a team of differential-drive robots each with a state $x_{i} \coloneqq \begin{bmatrix}x_{i, 1} ~ x_{i, 2} ~ \theta_{i}\end{bmatrix}^{\top}$, output $p_{i} \coloneqq \begin{bmatrix}p_{i, 1} ~ p_{i, 2}\end{bmatrix}^{\top}$ and input $u_{i} \coloneqq \begin{bmatrix}v_i ~ \omega_i\end{bmatrix}^{\top}$. Each robot nominally obeys the following dynamics
\begin{equation}
    \dot{x}_{i} =
    \begin{bmatrix}
        \cos \theta_{i} & 0 \\ 
        \sin \theta_{i} & 0 \\ 
        0 & 1
    \end{bmatrix}
    u_{i}, \;\;
    \dot{p}_{i} =
    \begin{bmatrix}
        \cos \theta_{i} & -\sin \theta_{i} \\
        \sin \theta_{i} & \cos \theta_{i} 
    \end{bmatrix} 
    \begin{bmatrix}
        1 & 0 \\ 
        0 & l_{p}
    \end{bmatrix}
    u_{i},
\end{equation}
where $l_{p}$ is the look-ahead projection distance. The disturbances are then simulated by altering the control input as explained in each experiment. We note that the CBFs are formulated with respect to the output. Moreover, for the RCBFs, the GP models are learned \textit{a priori} and the magnitude of the disturbance is limited by $d_{\text{max}} = 0.10$. We note that learning the disturbance online is left for future work.

The setup of the first experiment is shown on the left-hand side of Figure~\ref{fig:exp1}. Four robots are to execute four go-to-goal tasks depicted by the circles. A simulated disturbance is induced in the brown section of the arena, where the disturbed control input applied is given by
$u_{d} = u + 0.02cos(\theta) \begin{bmatrix}1 ~ 0 \end{bmatrix}^{\top}$. This disturbance aims to simulate a sloped surface, which the robots must climb to reach the tasks. The experiment is then ran using both the proposed and baseline frameworks. Shown in the middle of Figure~\ref{fig:exp1} is the sum of squares of the energy functions between the robots and their assigned tasks over time. As the robots progress towards accomplishing the tasks, this sum should approach zero, which is the case for the proposed framework but not the baseline framework. Moreover, to the right-hand side of Figure~\ref{fig:exp1} showcases the specialization of robot $1$ towards its assigned task at each time step for each framework. As opposed to the proposed framework, not accounting for the disturbance in the update law of the baseline framework causes the specialization to reach zero for all tasks, although the robot is indeed executing the task. This is in turn causes a cycle, where the baseline framework assigns the robot another task, for which its specialization again decreases to zero as demonstrated by the periodicity of the plot. 

The second experiment aims to demonstrate that if a disturbance is large enough the robots cannot overcome it, the proposed framework will indeed adapt and generate a different allocation of tasks. The setup of the experiment is shown on the left-hand side of Figure~\ref{fig:exp2}. A team composed of $1$ ground robots and $2$ simulated quadcopters is to execute $2$ go-to-goal tasks. Similarly to the first experiment, a disturbance is induced in the brown area of the arena resulting in the disturbed control input $u_{d} = u + 0.2cos(\theta) \begin{bmatrix}1 ~ 0 \end{bmatrix}^{\top}$. This disturbance only affects the ground robot. Note that, as opposed to the first experiment, the ground robot cannot overcome this disturbance due to its control limits.

Shown in the middle of Figure~\ref{fig:exp1} is the sum of squares of the energy functions between the robots and their assigned tasks over time. Initially, the ground robot and the bottom quadcopter are assigned the orange and blue tasks respectively. However, when the ground robot faces the disturbance (slope), it cannot overcome it and its specialization decreases towards its task (as shown in the right-hand plot of Figure~\ref{fig:exp2}).  This causes a new allocation of tasks where the top quadcopter is assigned the orange task which is depicted by the spike in the energy as shown in the middle plot of Figure~\ref{fig:exp2}. Comparing the performance of both frameworks, as shown in the energy plot of Figure~\ref{fig:exp2}, the proposed framework is slightly delayed compared to the baseline. This is because the proposed framework accounts for a portion of the disturbance, limited by the magnitude $d_{\text{max}}$. As such, the difference between modelled and actual progresses in the proposed framework is smaller in the baseline resulting a slower change in the specializations. This phenomena represents a trade-off between robustness and adaptivity, where the robots either attempt to accomplish the task to the best of their ability, or re-configure so that a more suited robot can execute the task.

\section{Conclusion} 
\label{sec:conclusion}
In this paper, we introduce a task allocation and execution framework for heterogeneous robot teams that explicitly accounts for environmental disturbances by augmenting the pre-defined dynamics models with a learned component. As shown empirically, the integration of the learned dynamics using robust control barrier functions and a novel adaptive specialization law renders the framework robust to disturbances regarding task execution and permits the framework to differentiate between disturbances that the robots can and cannot overcome.

\bibliographystyle{unsrt}
\bibliography{ref.bib}

\end{document}